\documentclass[conference]{IEEEtran}
\IEEEoverridecommandlockouts
\usepackage{booktabs} 

\usepackage[a4paper]{geometry}
\usepackage{blkarray}
\usepackage{multirow}

\usepackage{cite}
\usepackage{amsmath,amssymb,amsthm,amsfonts}
\usepackage{mathtools}
\usepackage{caption,subcaption}
\usepackage{graphicx}
\graphicspath{{./}}
\usepackage{textcomp}
\usepackage{xcolor}
\usepackage{bm}

\def\BibTeX{{\rm B\kern-.05em{\sc i\kern-.025em b}\kern-.08em
		T\kern-.1667em\lower.7ex\hbox{E}\kern-.125emX}}
\newtheorem{thm}{Theorem}
\newtheorem{lem}[thm]{Lemma}
\newtheorem{prop}[thm]{Proposition}

\DeclarePairedDelimiterX{\norm}[1]{\lVert}{\rVert}{#1}

\usepackage{algorithm}
\usepackage{algpseudocode}

\newtheoremstyle{problemstyle}
{\topsep} 
{\topsep}
{\itshape} 
{}         
{\bfseries}
{.}        
{.5em}     
{}         

\theoremstyle{problemstyle}

\AtBeginDocument{
	\setlength{\abovedisplayskip}{3pt}
	\setlength{\belowdisplayskip}{3pt}
	\setlength{\abovedisplayshortskip}{2pt}
	\setlength{\belowdisplayshortskip}{2pt}
}
\begin{document}

\title{A Theoretical Framework for Energy-Aware Gradient Pruning in Federated Learning}

\author{\IEEEauthorblockN{Emmanouil M.~Athanasakos}
\IEEEauthorblockA{\textit{National and Kapodistrian University of Athens} \\
\textit{Department of Informatics and Telecommunications}\\
Athens, Greece \\
emathan@di.uoa.gr}

}

\maketitle

\begin{abstract}
Federated Learning (FL) is constrained by the communication and energy limitations of decentralized edge devices. While gradient sparsification via Top-K magnitude pruning effectively reduces the communication payload, it remains inherently energy-agnostic. It assumes all parameter updates incur identical downstream transmission and memory-update costs, ignoring hardware realities. We formalize the pruning process as an energy-constrained projection problem that accounts for the hardware-level disparities between memory-intensive and compute-efficient operations during the post-backpropagation phase. We propose Cost-Weighted Magnitude Pruning (CWMP), a selection rule that prioritizes parameter updates based on their magnitude relative to their physical cost. We demonstrate that CWMP is the optimal greedy solution to this constrained projection and provide a probabilistic analysis of its global energy efficiency. Numerical results on a non-IID CIFAR-10 benchmark show that CWMP consistently establishes a superior performance-energy Pareto frontier compared to the Top-K baseline.
\end{abstract}

\begin{IEEEkeywords}
Federated learning, sparsification, energy efficiency, hardware-aware optimization.
\end{IEEEkeywords}

\section{Introduction}
\label{sec:intro}
\vspace{-0.2em}
Federated Learning (FL) has emerged as the definitive paradigm for privacy-preserving decentralized optimization, enabling collaborative model training without the exfiltration of sensitive local data \cite{mcmahan2017communication}. As neural architectures scale in complexity, the research focus has shifted toward sustainability~\cite{schwartz2020green}, where resource efficiency is prioritized alongside model performance. In wireless edge environments, FL is bottlenecked by the dual constraints of limited communication bandwidth and finite battery life \cite{gunduz2022beyond}.

To mitigate communication overhead, gradient sparsification has become the de facto standard \cite{aji2017sparse, stich2018sparsified}. Techniques such as Deep Gradient Compression \cite{lin2018deep} and single-shot pruning \cite{lee2018snip} rely primarily on magnitude-based selection (Top-K) to reduce the update entropy. While effective for bandwidth reduction, these methods are inherently energy-agnostic: they assume a uniform counting measure for the cost of model updates, implicitly treating every parameter as having an identical thermodynamic footprint.

A recent comprehensive survey of energy efficiency in FL \cite{sezgin2025survey} reveals a rich landscape of high-level orchestration strategies, including Lyapunov-based resource allocation \cite{yang2021energy}, broadband analog aggregation \cite{zhu2020broadband}, and adaptive client selection \cite{wang2019adaptive}. Recently in \cite{pereira2025energy} the authors reduced the total energy footprint of AIoT networks through clustering-informed client selection. While these works address system-level heterogeneity \cite{li2020federated, karimireddy2020scaffold}, the underlying sparsification mechanism remain primitive. Recent results in the information geometry of local generalization dynamics \cite{athanasakos_aistats} and natural gradient descent \cite{amari1998natural} suggest that the true utility of a parameter update is governed by the local curvature of the loss landscape. In resource-constrained systems, this utility must be weighed against the physical cost of acquisition. Drawing inspiration from the Information Bottleneck principle\cite{ChechikTishby_NIPS2002} and the Lottery Ticket Hypothesis \cite{frankle2018lottery}, we argue that an optimal update strategy must formally co-optimize for informational relevance and physical energy.

In this work, we address the energy-agnosticism of standard pruning by formalizing the update process as an energy-constrained projection problem. We introduce a computational energy measure to model the hardware-level cost disparity between memory-intensive and compute-efficient parameters during the post-backpropagation phase. The core contributions are:
\begin{itemize}
	\item  We define a framework for energy-aware learning, characterizing parameter transmission and memory-update costs as atoms of a discrete energy measure on the parameter index set.
	\item  We derive CWMP, a selection rule that ranks parameters by their \textit{efficiency density}. We prove that CWMP is the optimal greedy solution to the resource-constrained projection.
	\item Through numerical experiments on a non-IID CIFAR-10 benchmark, we demonstrate that CWMP establishes a superior Performance-Energy Pareto frontier and follows a convergence trajectory nearly identical to the dense-informed Top-K.
\end{itemize}
\section{Problem Formulation}
\label{sec:background}

We begin by formalizing the FL process and the standard method of gradient sparsification. We then introduce a model for computational cost, revealing a critical oversight in current energy-agnostic pruning techniques.

\noindent Let $(\Omega, \mathcal{F}, \mathbb{P})$ be a probability space. We consider the collaborative optimization of a global parameter $\mathbf{w} \in \mathbb{R}^d$. The objective is to minimize a global risk functional $F(\mathbf{w}) \triangleq \sum_{k=1}^{K} p_k F_k(\mathbf{w})$, where each local loss $F_k(\mathbf{w}) \triangleq \int \ell(\mathbf{w}; \xi) d\mathcal{D}_k(\xi)$ is defined with respect to a local data measure $\mathcal{D}_k$ on a client $k$. The standard approach for this task is Federated Averaging (FedAvg) \cite{mcmahan2017communication}. Let $\{\mathcal{F}_t\}_{t \ge 0}$ be the filtration representing the history of the learning process up to round $t$. The protocol proceeds as follows:
\begin{enumerate}
	\item The server broadcasts the current global parameters $\mathbf{w}^{(t)}$ to the clients.
	\item Each client $k$ computes a local update via Stochastic Gradient Descent (SGD) on its local measure $\mathcal{D}_k$. This yields a local gradient estimate $\mathbf{g}_k^{(t)}$, which is an $\mathcal{F}_t$-measurable random variable.
	\item Clients transmit a processed version of their gradients, $\tilde{\mathbf{g}}_k^{(t)}$, back to the server.
	\item The server updates the global state: $\mathbf{w}^{(t+1)} = \mathbf{w}^{(t)} - \eta \sum_{k=1}^{K} p_k \tilde{\mathbf{g}}_k^{(t)}$.
\end{enumerate}
\noindent This formulation acknowledges that the local updates are stochastic realizations of the underlying local measures, where the non-IID nature of the data is captured by the divergence between the measures $\{\mathcal{D}_k\}$. To mitigate communication bottlenecks in Step 3, clients apply a sparsification operator. We formalize this as a projection $\Pi_S : \mathbb{R}^d \to \mathbb{R}^d$ onto the coordinate subspace defined by a support set $S \subset \{1, \dots, d\}$ of cardinality $k \ll d$. The sparse update is given by:
\begin{equation}
	\tilde{\mathbf{g}}_k^{(t)} = \Pi_S (\mathbf{g}_k^{(t)}) = \mathbf{m}_k^{(t)} \odot \mathbf{g}_k^{(t)},
\end{equation}
where $\mathbf{m}_k^{(t)} \in \{0, 1\}^d$ is a binary pruning mask with $||\mathbf{m}_k^{(t)}||_0 = k$. The standard for generating this mask is Top-K magnitude pruning. Let $\mathcal{I}_k^{(t)}$ be the set of indices corresponding to the $k$ elements in $\mathbf{g}_k^{(t)}$ with the largest absolute values. The Top-K mask is defined as $m_{k,j}^{(t)} = 1$ if $j \in \mathcal{I}_k^{(t)}$, and 0 otherwise. This rule is effective for communication efficiency but is energy-agnostic: it assumes that every parameter $w_j$ incurs an identical physical cost to compute and transmit. The subsequent post-backpropagation phases (payload index-encoding, wireless transmission, and downstream memory-update accesses at the aggregator) exhibit severe hardware-level cost disparities that Top-K ignores. To address this, we introduce a computational energy measure $\mu$ on the power set of parameter indices to model these bottlenecks. For any $S$, $\mu(S)$ quantifies the physical energy consumed to process the selected components. This measure is characterized by its density with respect to the counting measure, represented by a parameter cost vector $\mathbf{c} \in \mathbb{R}_+^d$, such that $c_j$ represents the physical cost of the $j$-th parameter. The total energy footprint of an update $\tilde{\mathbf{g}}_k^{(t)}$ is modeled as:
\begin{equation}
	\mathcal{E}(\tilde{\mathbf{g}}_k^{(t)}) \triangleq \mu(S) = \sum_{j=1}^{d} m_{k,j}^{(t)} c_j = {\mathbf{m}_k^{(t)}}^\top \mathbf{c}.
	\label{eq:cost_measure}
\end{equation}
\noindent Under Top-K pruning, the communication cost $||\mathbf{m}_k^{(t)}||_0$ is fixed, but the resulting energy $\mathcal{E}(\tilde{\mathbf{g}}_k^{(t)})$ is an uncontrolled, stochastic quantity. This leads to suboptimal resource allocation, where a client expends its finite energy budget on gradient components with high magnitude but poor informational utility relative to their physical processing cost. This motivates the following optimization problem:
\begin{align}
	S^* = \operatorname*{arg\,max}_{S \subseteq \{1,\dots,d\}} \quad & \left\| \Pi_S (\mathbf{g}_k^{(t)}) \right\|_1 \label{eq:constrained_opt} \\
	\text{subject to} \quad & |S| \le k_{\text{sparsity}}, \tag{C1} \\
	& \mu(S) \le E_{\text{budget}}. \tag{C2}
\end{align}
Solving this requires a selection rule that considers the density of the gradient power with respect to the energy measure $\mu$, leading to the algorithm proposed in the following section.

\section{Cost-Weighted Magnitude Pruning}
\label{sec:cwmp}

The formulation in Section \ref{sec:background} implies that an optimal pruning strategy must maximize the utility of the update signal relative to the physical resources consumed. A gradient component, however large in magnitude, may represent a suboptimal use of the communication budget if its underlying computational cost is disproportionately high. This motivates a selection process governed by update efficiency, which is the ratio of its local contribution to the learning dynamics versus its associated energy cost. While the exact contribution of a parameter to generalization is determined by the local information geometry \cite{athanasakos_aistats}, the gradient magnitude $|g_{k,j}^{(t)}|$ serves as a robust and tractable first-order proxy for the update's informational utility. To this end, we introduce the \textit{efficiency score}, defined as the density of the gradient magnitude with respect to the cost measure $\mu$:
\begin{equation}
	s_{k,j}^{(t)} \triangleq \frac{|g_{k,j}^{(t)}|}{c_j},
	\label{eq:score}
\end{equation}
where $c_j > 0$ is the parameter cost atom. This score normalizes the importance of each gradient component by its physical expense. In this regime, a component must exhibit a significantly higher magnitude to be selected if it resides in a high-cost region of the parameter space (e.g., within a massive, memory-intensive fully-connected layer). Using this score, we propose CWMP. The selection process remains analogous to standard sparsification, but the ranking criterion is replaced by \eqref{eq:score}. Let $\mathcal{J}_k^{(t)}$ be the set of indices corresponding to the $k$ largest efficiency scores $\{s_{k,j}^{(t)}\}_{j=1}^d$. The CWMP pruning mask is formally defined as:
\begin{equation}
	m_{k,j}^{(t)} = 
	\begin{cases} 
		1 & \text{if } j \in \mathcal{J}_k^{(t)} \\
		0 & \text{otherwise}
	\end{cases}
	\label{eq:cwmp}
\end{equation}
The client-side procedure is detailed in Alg. \ref{alg:cwmp}.
\begin{algorithm}[h!]
	\caption{CWMP}
	\label{alg:cwmp}
	\begin{algorithmic}[1]
		\State \textbf{Require:} Global state $\mathbf{w}^{(t)}$, local measure $\mathcal{D}_k$, budget $k$, Cost Vector $\mathbf{c}$.
		\State \textbf{Client training:}
		\State $\mathbf{g}_k \leftarrow \text{LocalSGD}(\mathbf{w}^{(t)}, \mathcal{D}_k)$ \Comment{Estimate $\mathcal{F}_t$-measurable gradient}
		\State $\mathbf{s}_k \leftarrow |\mathbf{g}_k| \oslash \mathbf{c}$ \Comment{Compute efficiency scores (element-wise)}
		\State $\mathcal{J}_k \leftarrow \text{TopKIndices}(\mathbf{s}_k, k)$ \Comment{Rank by efficiency density}
		\State $\mathbf{m}_k \leftarrow \text{ConstructMask}(\mathcal{J}_k)$
		\State $\tilde{\mathbf{g}}_k \leftarrow \mathbf{m}_k \odot \mathbf{g}_k$ \Comment{Generate energy-aware projection}
		\State \textbf{return} $\tilde{\mathbf{g}}_k$ to server
	\end{algorithmic}
\end{algorithm}
CWMP is a generalization of Top-K pruning; for a uniform cost measure ($\mathbf{c} = \alpha \mathbf{1}$), the score $s_{k,j}$ becomes proportional to $|g_{k,j}|$, causing the CWMP selection to converge to the standard Top-K mask. For non-uniform costs, CWMP reallocates the communication budget towards parameters with higher arithmetic intensity and lower memory-access energy.

\subsection{Theoretical Justification}

Problem in \eqref{eq:constrained_opt} is a multidimensional 0/1 Knapsack problem, which is strongly NP-hard. To derive a computationally tractable $O(d \log k)$ selection rule, we isolate the energy-constrained subproblem. The following lemma establishes the optimal scoring metric for maximizing gradient mass under energy budget.
\begin{lem}
	Let $\mathbf{g} \in \mathbb{R}^d$ be a gradient vector and $\mathbf{c} \in \mathbb{R}_+^d$ be a strictly positive cost vector. Consider the energy-constrained gradient mass maximization over a support set $S \subseteq \{1, \dots, d\}$:
	\begin{equation}
		\max_{S} \sum_{j \in S} |g_j| \quad \text{subject to} \quad \sum_{j \in S} c_j \le E_{\text{budget}},
		\label{eq:knapsack}
	\end{equation}
	where $E_{\text{budget}} > 0$. The optimal selection policy for the continuous relaxation of \eqref{eq:knapsack} is to rank and select parameters in descending order of the efficiency score $s_j = |g_j|/c_j$.
\end{lem}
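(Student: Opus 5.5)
We will prove the claim as the classical fractional-knapsack argument, working directly with the continuous relaxation. Replace the binary inclusion indicators by $x_j \in [0,1]$, so the relaxed problem reads
\begin{equation*}
\max_{\mathbf{x}\in[0,1]^d} \sum_{j=1}^d |g_j| x_j \quad \text{subject to} \quad \sum_{j=1}^d c_j x_j \le E_{\text{budget}}.
\end{equation*}
First we dispose of the trivial case $\sum_j c_j \le E_{\text{budget}}$. There $\mathbf{x}=\mathbf{1}$ is feasible and optimal, and it coincides with taking all indices in score order.

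Otherwise, we relabel the indices so that $s_{(1)} \ge s_{(2)} \ge \dots \ge s_{(d)}$, where $s_j = |g_j|/c_j$ is well defined because $c_j>0$. Let $r$ be the largest index with $\sum_{i\le r} c_{(i)} \le E_{\text{budget}}$. The greedy solution $\mathbf{x}^{\mathrm{g}}$ sets $x_{(i)}=1$ for $i\le r$, sets the fractional value $x_{(r+1)} = \bigl(E_{\text{budget}}-\sum_{i\le r}c_{(i)}\bigr)/c_{(r+1)}$, and sets all remaining coordinates to $0$. It saturates the budget exactly.

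\textbf{Optimality via weak duality.} We introduce a multiplier $\lambda = s_{(r+1)} \ge 0$ for the budget constraint and multipliers $\nu_j = \max(0, |g_j| - \lambda c_j)$ for the upper bounds $x_j\le 1$. For any feasible $\mathbf{x}$ we have $|g_j| \le \lambda c_j + \nu_j$, which gives
\begin{equation*}
\sum_j |g_j| x_j \le \lambda \sum_j c_j x_j + \sum_j \nu_j x_j \le \lambda E_{\text{budget}} + \sum_j \nu_j.
\end{equation*}
We then check that $\mathbf{x}^{\mathrm{g}}$ attains this bound, using three complementary-slackness facts:
\begin{itemize}
\item $\nu_j>0$ only when $s_j>\lambda$, so only for the fully selected items, where $x_j=1$;
\item items with $s_j<\lambda$ have $x_j=0$;
\item the budget is tight, so equality holds in the first inequality.
\end{itemize}
Hence the greedy ordering by $s_j$ is optimal for the relaxation.

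\textbf{Exchange argument as a cross-check.} Alternatively, suppose a feasible $\mathbf{x}$ has $x_a<1$ and $x_b>0$ with $s_a>s_b$. Shifting $\delta$ units of energy from $b$ to $a$, that is, setting $x_a \mathrel{+}= \delta/c_a$ and $x_b \mathrel{-}= \delta/c_b$, keeps the budget fixed. It changes the objective by $\delta(s_a-s_b)>0$. So every optimum must be \emph{threshold-shaped} in $s$, and among such solutions the saturating one is best.

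\textbf{Main obstacle.} The main difficulty is the bookkeeping around ties and the single fractional item, rather than any deep estimate. When several indices share the threshold score $s_{(r+1)}$, the optimum is not unique: any split of the leftover budget among the tied items is optimal. The statement should therefore be read as saying that the score-ordered policy is \emph{an} optimal solution. We will also remark that dropping the fractional item recovers a feasible 0/1 set. That set is not necessarily optimal for \eqref{eq:knapsack} itself, which is consistent with the lemma addressing only the relaxation.
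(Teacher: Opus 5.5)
Your proof is correct. It uses the same Lagrangian ingredients as the paper (a multiplier $\lambda$ for the budget, multipliers for the box constraints, complementary slackness), but it runs the argument in the opposite direction.

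\textbf{The paper's route.} It starts from an optimum $\mathbf{m}^*$ and writes down stationarity and complementary slackness. This implicitly assumes that KKT multipliers exist, which is automatic for an LP but is not stated. By case analysis it deduces that every optimum is threshold-shaped in $s_j$: $m_j^*=1$ forces $s_j\ge\lambda$, $m_j^*=0$ forces $s_j\le\lambda$, and fractional entries force $s_j=\lambda$. What this buys is a characterization of \emph{all} optima, i.e.\ that ranking by $s_j$ is forced rather than merely sufficient. However, the final step is described informally rather than verified. That step is the claim that filling greedily, ``lowering $\lambda$ until the budget is exhausted,'' actually attains the optimum, together with the choice of $\lambda$ this corresponds to.

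\textbf{Your route.} You construct the greedy point and exhibit the explicit certificate $\lambda=s_{(r+1)}$, $\nu_j=\max(0,|g_j|-\lambda c_j)$. You then close the gap by weak duality alone. This proves directly that the greedy rule is optimal, which is exactly what the lemma claims, with no appeal to strong duality. Your exchange argument recovers the paper's necessity statement in a more elementary way. You also treat the case $\sum_j c_j\le E_{\text{budget}}$ and ties, which the paper passes over.

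\textbf{A small labeling slip.} Budget tightness, together with your first bullet ($\nu_j(1-x_j)=0$), gives equality in the \emph{second} inequality of your chain. Your second bullet ($s_j<\lambda\Rightarrow x_j=0$) is what gives equality in the first. The three facts themselves are right and jointly suffice.
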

\begin{IEEEproof}
	The proof is presented in Appendix A.
\end{IEEEproof}
Lemma 1 establishes $s_j$ as the theoretically optimal density for the energy-constrained relaxation. To satisfy the joint constraints of \eqref{eq:constrained_opt}, CWMP employs this optimal density as the ranking criterion while applying a hard truncation at the $k$-th index. This ensures compliance with (C1) while providing a high-fidelity greedy approximation to the multidimensional optimum. The following proposition characterizes the expected global energy profile of this selection rule compared to the energy-agnostic baseline.
\begin{prop}
	Let the gradient components $\{|g_j|\}_{j=1}^d$ and parameter costs $\{c_j\}_{j=1}^d$ be drawn from independent distributions. For a fixed communication budget $k$, let $\mathbf{m}_{\text{TopK}}$ and $\mathbf{m}_{\text{CWMP}}$ be the pruning masks generated by the Top-K and CWMP algorithms, respectively. The expected computational energy cost satisfies:
	\begin{equation}
		\mathbb{E}\left[ \mathbf{m}_{\text{CWMP}}^\top \mathbf{c} \right] \le \mathbb{E}\left[ \mathbf{m}_{\text{TopK}}^\top \mathbf{c} \right],
	\end{equation}
	where equality holds if and only if the cost measure $\mu$ is the counting measure.
\end{prop}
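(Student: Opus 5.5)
The plan is to reduce the proposition to a pathwise (realization-by-realization) comparison and then take expectations. Fix a realization of $\{|g_j|\}$ and $\{c_j\}$, and write $\mathcal{I}$ and $\mathcal{J}$ for the Top-K and CWMP supports. Both have cardinality $k$, so
\begin{equation*}
\mathbf{m}_{\text{CWMP}}^\top \mathbf{c} - \mathbf{m}_{\text{TopK}}^\top \mathbf{c} = \sum_{j \in \mathcal{J}\setminus\mathcal{I}} c_j - \sum_{j \in \mathcal{I}\setminus\mathcal{J}} c_j ,
\end{equation*}
and $|\mathcal{J}\setminus\mathcal{I}| = |\mathcal{I}\setminus\mathcal{J}|$. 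It therefore suffices to control the costs of the swapped indices. A pure pathwise inequality is false in general: a cheap index can enter $\mathcal{J}$ because it is cheap, while a displaced Top-K index may be cheaper still but have an even smaller magnitude. I will therefore use the independence hypothesis. Assume, as is implicit in ``drawn from independent distributions,'' that the pairs $(|g_j|, c_j)$ are i.i.d. across $j$, with $|g_j|$ independent of $c_j$, and that the distributions are continuous so ties have probability zero.

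\textbf{Top-K side.} $\mathcal{I}$ is a function of $\mathbf{g}$ alone, so by independence $\mathbb{E}[\mathbf{m}_{\text{TopK}}^\top \mathbf{c}] = \mathbb{E}\big[\sum_{j\in\mathcal{I}} \mathbb{E}[c_j]\big] = k\,\mathbb{E}[c_1]$. Top-K thus spends exactly the unconditional average cost per slot.

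\textbf{CWMP side.} By exchangeability, $\mathbb{E}[\mathbf{m}_{\text{CWMP}}^\top \mathbf{c}] = d\,\mathbb{E}[c_1 \mathbf{1}\{1\in\mathcal{J}\}]$. The event $\{1\in\mathcal{J}\}$ is $\{s_1 > T\}$, where $T$ is the $k$-th largest of $\{s_j\}_{j\ge 2}$, and $T$ is independent of $(g_1,c_1)$. Conditioning on $T=\tau$ and $c_1=c$, the inclusion probability is
\begin{equation*}
\phi_\tau(c) = \mathbb{P}(|g_1| > \tau c),
\end{equation*}
which is nonincreasing in $c$ because it is a survival function evaluated at $\tau c$. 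Chebyshev's association (covariance) inequality applied to the increasing function $c$ and the nonincreasing function $\phi_\tau(c)$ gives $\mathbb{E}[c_1\phi_\tau(c_1)] \le \mathbb{E}[c_1]\,\mathbb{E}[\phi_\tau(c_1)]$. Integrating over $\tau$ yields
\begin{equation*}
\mathbb{E}[\mathbf{m}_{\text{CWMP}}^\top \mathbf{c}] \le d\,\mathbb{E}[c_1]\,\mathbb{P}(1\in\mathcal{J}) = k\,\mathbb{E}[c_1],
\end{equation*}
since exactly $k$ indices are selected.

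\textbf{Equality case.} This is the main obstacle. Chebyshev's inequality is an equality iff $c_1\phi_\tau(c_1)$ decorrelates, which happens iff $c$ or $\phi_\tau(c)$ is a.s. constant for a.e.\ $\tau$. If $c_j \equiv \alpha$, CWMP coincides with Top-K (the generalization remark after Alg.~\ref{alg:cwmp}), so equality holds. I read ``counting measure'' as ``uniform cost,'' since $c=\alpha\mathbf{1}$ gives a scalar multiple of the counting measure and the same masks. Conversely, I must show that nondegenerate $c$ forces $\phi_\tau$ to be strictly decreasing on a set of positive $(\tau, c)$-measure. I would first try to argue this by assuming $|g_1|$ has a density that is positive on an interval containing the relevant range of $\tau c$. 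Without such a support condition the strict inequality can fail, for instance if $|g|$ is atomic. I would state this regularity assumption explicitly as part of the proof, since the proposition is silent on it.
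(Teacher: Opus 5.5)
Your proof of the inequality is correct and follows the paper's route. The Top-K mask depends on $\mathbf{g}$ alone, which gives $k\,\mathbb{E}[c_1]$. For CWMP the conditional inclusion probability is nonincreasing in the cost, so Chebyshev's association inequality caps each index's expected energy at $\frac{k}{d}\mathbb{E}[c_1]$. Your leave-one-out threshold $T$ is more rigorous than the paper's version. The paper conditions on $C_j=c$ while using a threshold $\tau_S$ that is an order statistic of a sequence containing $S_j$ itself, so its monotonicity claim for $\phi$ is only heuristic. Your $\phi_\tau(c)=\mathbb{P}(|g_1|>\tau c)$ is visibly nonincreasing.

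The gap is the ``only if'' half of the equality clause. You leave it open and propose adding a density assumption, but that assumption is not needed: the atomlessness of $|g|$ that you already imposed suffices. Pick $a<b$ in the support of the cost distribution and set $g^\star=\inf\{x:F_G(x)\ge\tfrac12\}>0$, so that $F_G(x)<F_G(g^\star)$ for all $x<g^\star$. With $\tau_0=g^\star/b$,
\[
\phi_{\tau_0}(a)-\phi_{\tau_0}(b)=F_G(g^\star)-F_G(g^\star a/b)>0 .
\]
Because $F_G$ is continuous, this persists for $(\tau,c,c')$ in a neighbourhood of $(\tau_0,a,b)$. That neighbourhood has positive probability when $\tau\sim T$ and $c,c'$ are independent cost draws, for two reasons. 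First, $\tau_0$ lies in the support of $s_1$, since $g^\star$ and $b$ lie in the supports of $|g|$ and $c$. Second, $\mathbb{P}(T\in U)\ge\mathbb{P}(s_1\in U)^{d-1}$, because if every $s_j$ with $j\ge 2$ lies in $U$, so does $T$. The identity $\mathrm{Cov}(c,\phi_\tau(c))=\tfrac12\mathbb{E}[(c-c')(\phi_\tau(c)-\phi_\tau(c'))]$ then gives a strictly negative covariance on a set of $\tau$ of positive probability. Hence the inequality is strict whenever the cost is nondegenerate.

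What your write-up genuinely lacks is the hypothesis $1\le k\le d-1$. It is implicit in the paper's $k\ll d$, and your $T$ does not even exist without it. For $k=d$ both masks equal $\mathbf{1}$ for every cost vector, so the ``only if'' fails as stated.

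The paper's own equality step has the same hole. It concludes that $C_j$ is constant from $\mathrm{Cov}(C_j,\phi(C_j))=0$ merely because $c\mapsto c$ is strictly increasing, overlooking that $\phi$ can be constant on the support of $C_j$. So your diagnosis is sharper than the paper's, but the converse still has to be completed along the lines above.
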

\begin{IEEEproof}
 The proof is presented in Appendix B.
\end{IEEEproof}
\noindent This result provides the theoretical justification for the empirical results that follow. By construction, CWMP is biased toward selecting indices with high information density per unit of energy, ensuring that the device's energy budget is allocated only to the most efficient gradient updates.

\section{Experimental Validation}
\label{sec:experiments}

We evaluate our method on the CIFAR-10 image classification task and utilize a ResNet-18 architecture \cite{he2016deep} adapted for $32 \times 32$ inputs.

\paragraph{Federated Configuration}
We simulate a federation of $K=10$ clients with statistical heterogeneity modeled via a Dirichlet distribution with concentration $\alpha=0.5$. The global model is trained for $T=50$ rounds. Each client performs local optimization using SGD with a learning rate of 0.05, momentum of 0.9, and a batch size of 64.

\paragraph{Memory-Centric Cost Model}
Vector $\mathbf{c}$ is defined based on the physical energy disparity between memory access and computation in edge hardware \cite{horowitz20141}. Generating the dense gradient requires a full backward pass; therefore, our cost model explicitly targets the post-backpropagation bottlenecks. In modern CMOS accelerators, the energy cost of a DRAM fetch—required for the massive, non-reused weights in the classifier—dominates the budget\cite{han2015deep}. Conversely, convolutional filters exhibit high arithmetic intensity and SRAM reuse\cite{sze2017efficient}. We thus define:
\begin{equation}
	c_j = 
	\begin{cases} 
		5.0 & \text{if } w_j \in \text{Classifier Layers} \\
		1.0 & \text{if } w_j \in \text{Feature Extr. (Conv) Layers}
	\end{cases}
\end{equation}
Measure $\mu$ decouples the informational utility of a parameter from its physical processing cost, allowing us to evaluate the impact of constraint (C2).

\paragraph{Baselines}
We compare CWMP against Top-K Magnitude Pruning, which serves as the energy-agnostic baseline. Both methods are evaluated across a range of sparsity levels $k \in \{1\%, 5\%, 10\%, 20\%\}$ to trace the Pareto frontier.

\begin{figure}[htb!]
	\centering
	\includegraphics[width=\columnwidth]{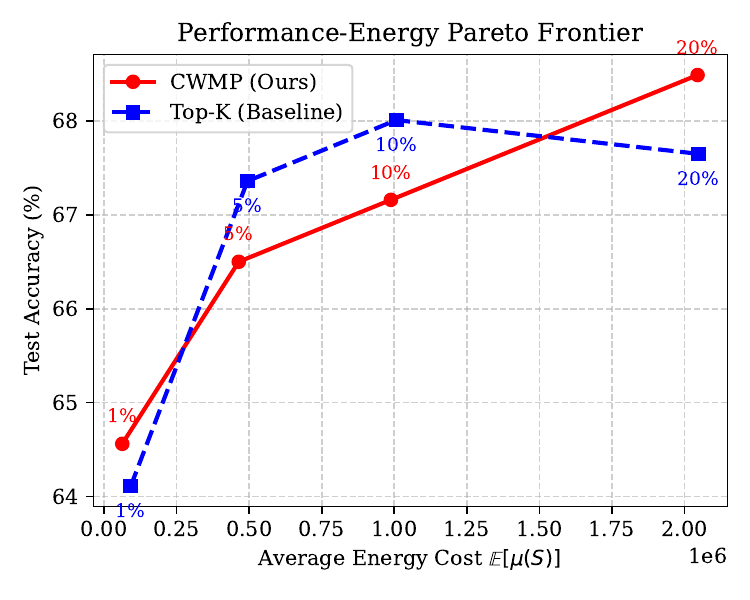}
	\caption{\small Performance-Energy Pareto Frontier. CWMP is the dominant strategy in the extreme scarcity regime (1\%) and maintains a superior accuracy-per-energy profile as the budget increases, avoiding the performance regression observed in Top-K.}
	\label{fig:pareto}
\end{figure}

\noindent As illustrated in Fig. \ref{fig:pareto}, at a 1\% communication budget, CWMP acts as a strictly dominant strategy. It achieves a test accuracy of 64.56\% while incurring a normalized energy cost of $0.06 \times 10^6$ units. In contrast, the energy-agnostic Top-K baseline requires $0.09 \times 10^6$ energy units—a 48\% increase—to achieve a lower accuracy of 64.11\%. This confirms that when resources are most scarce, energy-blind pruning wastes the budget on high-cost updates that offer marginal generalization returns. At the 20\% sparsity level, we observe a divergence in the behavior of the two strategies. While the Top-K baseline incurs a significant increase in computational cost, its test accuracy exhibits a slight regression (from 68.01\% to 67.65\%). In this specific architectural setup, this suggests that magnitude-based selection may prioritize high-variance updates in the massive classifier layers, leading to marginal overfitting. 
Conversely, the CWMP trajectory remains robust, reaching a peak accuracy of 68.49\%. This suggests that the cost-weighted selection rule acts as an implicit structural regularizer. By imposing a higher "price" on the memory-intensive parameters of the fully-connected layers, CWMP naturally shifts the learning focus toward the more compact and robust features in the convolutional layers. While these results are observed within the context of ResNet-18, they highlight the potential for energy-awareness to serve as a functional prior in resource-constrained learning.

Fig. \ref{fig:acc_vs_rounds} confirms that incorporating the cost measure does not destabilize the stochastic dynamics. CWMP follows a convergence trajectory nearly identical to the dense-informed Top-K, reaching the 70\% accuracy threshold within 15 rounds,but with a significantly lower cumulative energy footprint. 
\begin{figure}[htb!]
	\centering
	\includegraphics[width=\columnwidth]{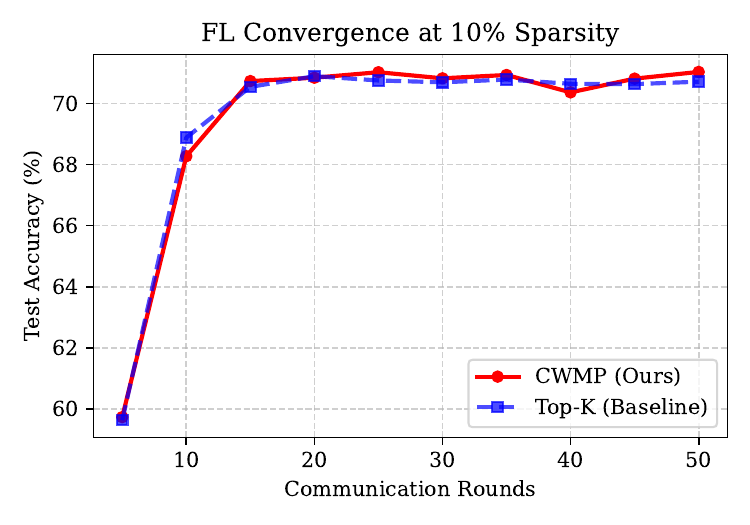}
	\caption{\small Convergence Dynamics at 10\% Sparsity. CWMP matches the convergence rate of the Top-K baseline while achieving a higher accuracy, demonstrating a more efficient allocation of the fixed communication budget.}
	\label{fig:acc_vs_rounds}
\end{figure}

\section{Conclusion and Future Work}
\label{sec:conclusion}
In this work, we addressed the energy-agnosticism of standard gradient sparsification in FL. By formalizing the update process as an energy-constrained projection, we proposed a selection rule that prioritizes parameters based on their efficiency density rather than raw magnitude. Empirical evaluations on a non-IID CIFAR-10 benchmark demonstrated that CWMP establishes a strictly superior performance-energy Pareto frontier, acting as an implicit regularizer that prevents the model from over-allocating resources to high-cost, noisy updates. Our evaluation isolates this sparsification primitive to highlight the pure impact of energy-awareness. Because CWMP is entirely orthogonal to error compensation mechanisms, it serves as a replacement for the standard Top-K operator. It can be readily integrated into  Error-Feedback frameworks \cite{lin2018deep} to further enhance the Pareto efficiency of modern federated optimizers. 

Ultimately, the success of this cost-aware projection demonstrates that physical energy metrics are not merely peripheral constraints, but functional priors that guide the model toward more robust representations. This motivates a deeper, information-theoretic investigation to derive provably optimal compression strategies that formally co-optimize learning performance, communication bandwidth, and hardware-level energy expenditure.

\bibliographystyle{IEEEtran} 
\bibliography{references_manos_Fed_IB_spawc} 


\appendices

\section{Proof of Lemma 1}
\begin{IEEEproof}
We represent the selection of the support set $S$ via a binary mask $\mathbf{m} \in \{0, 1\}^d$, where $m_j = 1$ if $j \in S$ and $0$ otherwise. To identify the optimal selection rule, we consider the continuous relaxation where $m_j \in [0, 1]$. The resulting optimization problem is a linear program (LP):
\begin{align}
	\max_{\mathbf{m} \in \mathbb{R}^d} \quad & \sum_{j=1}^{d} m_j |g_j| \label{eq:lp_obj} \\
	\text{subject to} \quad & \sum_{j=1}^{d} m_j c_j \le E_{\text{budget}}, \label{eq:lp_c1} \\
	& 0 \le m_j \le 1, \quad \forall j \in \{1, \dots, d\}. \label{eq:lp_c2}
\end{align}
	
\noindent Let $\lambda \ge 0$ be the Lagrange multiplier associated with the energy constraint \eqref{eq:lp_c1}. Let $\alpha_j \ge 0$ and $\beta_j \ge 0$ be the multipliers for the lower bound ($-m_j \le 0$) and upper bound ($m_j - 1 \le 0$) constraints of \eqref{eq:lp_c2}, respectively. The Lagrangian is given by
\begin{align}
	\mathcal{L}(\mathbf{m}, &\lambda, \boldsymbol{\alpha}, \boldsymbol{\beta}) \nonumber \\
	&= \sum_{j=1}^d m_j |g_j| - \lambda \left( \sum_{j=1}^d m_j c_j - E_{\text{budget}} \right)\nonumber \\  
	& + \sum_{j=1}^d \alpha_j m_j - \sum_{j=1}^d \beta_j (m_j - 1). \label{eq:langran}
\end{align}
The stationarity condition for each component $m_j$ is:
\begin{align}
	\frac{\partial \mathcal{L}}{\partial m_j}& = |g_j| - \lambda c_j + \alpha_j - \beta_j = 0 \nonumber \\
	&\implies |g_j| - \lambda c_j = \beta_j - \alpha_j.
	\label{eq:stationarity}
\end{align}
From the complementary slackness, the following conditions hold for all $j$:
\begin{align}
	\alpha_j m_j^* &= 0, \label{eq:cs1} \\
	\beta_j (m_j^* - 1) &= 0. \label{eq:cs2}
\end{align}

\noindent We now analyze the possible states of the optimal allocation $m_j^*$ based on the stationarity condition \eqref{eq:stationarity} and complementary slackness:

\begin{enumerate}
	\item Case $m_j^* = 1$ (Fully selected): From \eqref{eq:cs1}, we must have $\alpha_j = 0$. Since $\beta_j \ge 0$, equation \eqref{eq:stationarity} implies $|g_j| - \lambda c_j = \beta_j \ge 0$. Dividing by the strictly positive cost $c_j$, we obtain:
	\begin{equation}
		\frac{|g_j|}{c_j} \ge \lambda.
	\end{equation}
	
	\item Case $m_j^* = 0$ (Not selected): From \eqref{eq:cs2}, we must have $\beta_j = 0$. Since $\alpha_j \ge 0$, equation \eqref{eq:stationarity} implies $|g_j| - \lambda c_j = -\alpha_j \le 0$. Thus:
	\begin{equation}
		\frac{|g_j|}{c_j} \le \lambda.
	\end{equation}
	
	\item Case $0 < m_j^* < 1$ (Fractionally selected): From \eqref{eq:cs1} and \eqref{eq:cs2}, we must have $\alpha_j = 0$ and $\beta_j = 0$. Equation \eqref{eq:stationarity} therefore requires:
	\begin{equation}
		\frac{|g_j|}{c_j} = \lambda.
	\end{equation}
\end{enumerate}
These conditions dictate that the optimal continuous allocation $m_j^*$ is entirely determined by comparing the ratio $|g_j|/c_j$ against the optimal dual variable $\lambda$. To satisfy the constraints while maximizing the objective, the optimal policy must assign $m_j^* = 1$ to parameters with the highest $|g_j|/c_j$ ratios, iteratively lowering the threshold $\lambda$ until the energy budget is exhausted. Therefore, selecting parameters in descending order of the efficiency score $s_j = |g_j|/c_j$ precisely matches the optimal greedy policy for the relaxed energy-constrained projection problem and that completes the proof.
\end{IEEEproof}

\section{Proof of Proposition 2}
\begin{IEEEproof}
	We model the gradient magnitudes $G_j \triangleq |g_j|$ and the parameter costs $C_j \triangleq c_j$ as sequences of i.i.d. random variables drawn from continuous, strictly positive distributions $F_G$ and $F_C$, respectively. Furthermore, we assume\footnote{While architectural priors in deep networks often induce correlations between gradient magnitudes and layer-wise costs, this assumption establishes a baseline analytical bound by mathematically isolating the energy savings attributable strictly to the algorithmic selection rule.} $G_j$ and $C_j$ are independent of each other for all $j \in \{1, \dots, d\}$. Let $k$ be the fixed communication budget (sparsity level). The selection policies for Top-K and CWMP define binary masks $\mathbf{m}^{\text{TopK}}, \mathbf{m}^{\text{CWMP}} \in \{0, 1\}^d$ subject to the cardinality constraint $\sum_{j=1}^d m_j = k$. The Top-K algorithm selects indices based solely on the gradient magnitude $G_j$. The mask is defined via the indicator function:
	\begin{equation}
		m_j^{\text{TopK}} = \mathbb{I}\left\{ G_j \ge \tau_G \right\},
	\end{equation}
	where $\tau_G$ is the $(d-k+1)$-th order statistic of the sequence $\{G_1, \dots, G_d\}$.

Because the selection threshold $\tau_G$ and the sequence $G_j$ are entirely independent of the hardware costs $C_j$, the random variables $m_j^{\text{TopK}}$ and $C_j$ are independent. Therefore, the expected energy cost for a single parameter is separable:
\begin{equation}
	\mathbb{E}\left[ m_j^{\text{TopK}} C_j \right] = \mathbb{E}\left[ m_j^{\text{TopK}} \right] \mathbb{E}\left[ C_j \right].
\end{equation}
By the exchangeability of the i.i.d. variables, the joint distribution is invariant under any permutation of the indices \cite{kingman1978mathematical}. Consequently, the marginal probability $\mathbb{P}(j\in S)$ must be identical for all $j\in \{1,\ldots,d\}$. Given the cardinality constraint $\sum_{j=1}^{d} \mathbb{E}[m_j^{\text{TopK}}] = k$, it follows that $\mathbb{E}[m_j^{\text{TopK}}] = \frac{k}{d}$. Summing over all $d$ yields the baseline expected energy:
\begin{equation}
	\mathbb{E}\left[ (\mathbf{m}^{\text{TopK}})^\top \mathbf{c} \right] = \sum_{j=1}^d \mathbb{E}\left[ m_j^{\text{TopK}} C_j \right] = k \mathbb{E}[C_j].
	\label{eq:expected_topk}
\end{equation}

\noindent The CWMP algorithm selects indices based on the efficiency score $S_j \triangleq G_j / C_j$. The mask is defined as:
\begin{equation}
	m_j^{\text{CWMP}} = \mathbb{I}\left\{ S_j \ge \tau_S \right\},
\end{equation}
where $\tau_S$ is the $(d-k+1)$-th order statistic of the sequence $\{S_1, \dots, S_d\}$. Unlike Top-K, the selection variable $m_j^{\text{CWMP}}$ is strongly coupled to the cost $C_j$. To isolate this dependency, we define the conditional selection probability given a fixed cost realization $C_j = c$:
\begin{align}
	\phi(c) &\triangleq \mathbb{E}\left[ m_j^{\text{CWMP}} \mid C_j = c \right] \nonumber \\
	&= \mathbb{P}\left( \frac{G_j}{c} \ge \tau_S \mid C_j = c \right).
\end{align}

Because the ratio $G_j/c$ is strictly decreasing with respect to $c$, for $G_j > 0$, a higher cost $c$ lowers the efficiency score $S_j$, making it less likely to exceed the top-$k$ threshold $\tau_S$. Consequently, $\phi(c)$ is monotonically decreasing with respect to $c$. The expected energy of the $j$-th parameter under CWMP is:
\begin{align}
	\mathbb{E}\left[ m_j^{\text{CWMP}} C_j \right] &= \mathbb{E}_{C_j}\left[ \mathbb{E}\left[ m_j^{\text{CWMP}} C_j \mid C_j \right] \right] \nonumber \\ 
	&= \mathbb{E}_{C_j}\left[ C_j \phi(C_j) \right]. \label{eq:chebyshev_expect}
\end{align}
We evaluate the expectation \eqref{eq:chebyshev_expect} utilizing Chebyshev's association inequality\cite{boucheron2013concentration}.

\noindent Consider the random variable $C_j$. The identity function $f(c) = c$ is strictly monotonically increasing, while the conditional probability function $\phi(c)$ is monotonically decreasing. The covariance of two functions with opposite monotonicities applied to the same random variable is strictly non-positive:
\begin{equation}
	\text{Cov}\left( C_j, \phi(C_j) \right) \le 0.
\end{equation}
Expanding the covariance formulation yields:
\begin{align}
	\mathbb{E}\left[ C_j \phi(C_j) \right] &- \mathbb{E}[C_j]\mathbb{E}[\phi(C_j)] \le 0 \nonumber \\ 
	&\implies \mathbb{E}\left[ C_j \phi(C_j) \right] \le \mathbb{E}[C_j]\mathbb{E}[\phi(C_j)].
\end{align}
By the law of total expectation, $\mathbb{E}[\phi(C_j)] = \mathbb{E}[m_j^{\text{CWMP}}]$ and by the symmetry of the selection $\sum m_j^{\text{CWMP}} = k$, we maintain $\mathbb{E}[m_j^{\text{CWMP}}] = \frac{k}{d}$. Therefore:
\begin{equation}
	\mathbb{E}\left[ m_j^{\text{CWMP}} C_j \right] \le \frac{k}{d} \mathbb{E}[C_j].
\end{equation}
Summing this upper bound across all $d$ gives:
\begin{equation}
	\mathbb{E}\left[ (\mathbf{m}^{\text{CWMP}})^\top \mathbf{c} \right] \le k \mathbb{E}[C_j].
\end{equation}
Substituting the Top-K baseline derived in \eqref{eq:expected_topk} concludes the proof:
\begin{equation}
	\mathbb{E}\left[ (\mathbf{m}^{\text{CWMP}})^\top \mathbf{c} \right] \le \mathbb{E}\left[ (\mathbf{m}^{\text{TopK}})^\top \mathbf{c} \right].
\end{equation}
The equality holds if and only if $\text{Cov}( C_j, \phi(C_j) ) = 0$. Since $f(c) = c$ is strictly increasing, the covariance is zero if and only if the random variable $C_j$ is a constant almost surely. Thus equality is achieved when the parameter cost vector $\mathbf{c}$ is uniform, corresponding to the unweighted counting measure. 
\end{IEEEproof}
\end{document}